\def\var{\mathrm{var}}
\def\CR{\mathrm{CR}}
\def\calP{\mathcal{P}}
\def\var{\mathrm{var}}
\def\calM{\mathcal{M}}
\def\calX{\mathcal{X}}
\def\calD{\mathcal{D}}
\def\bbR{\mathbb{R}}
\def\st{\ :\ }
\def\HG{\mathrm{HG}}
\def\NH{\mathrm{NH}}  
\def\eqdef{{:=}}
\def\min{\mathrm{min}}
\def\max{\mathrm{max}}
\def\rhofd{\rho_{\mathrm{FD}}}
\def\rhohg{\rho_{\mathrm{HG}}}
\def\rhol1{\rho_{\mathrm{L1}}}
\renewcommand{\Re}{\mathbb{R}}
\newtheorem{theorem}{Theorem}
\newtheorem{lemma}{Lemma}
\newtheorem{property}{Property}
\title{Non-linear Embeddings in Hilbert Simplex Geometry\footnote{Presented at the 2nd Annual Workshop on Topology, Algebra, and Geometry in Machine Learning (TAG-ML), ICML 2023.}}
\author[$\star$]{Frank Nielsen}
\affil[$\star$]{Sony Computer Science Laboratories Inc.}
\affil[$\star$]{Japan}
\affil[$\star$]{ORCID:~0000-0001-5728-0726}
\affil[$\star$]{{\small\tt Frank.Nielsen@acm.org}\vspace{.5em}}
\author[$\dagger$]{{Ke Sun}}
\affil[$\dagger$]{CSIRO Data61, Australia}
\affil[$\dagger$]{The Australian National University}
\affil[$\dagger$]{ORCID:~0000-0001-6263-7355}
\affil[$\dagger$]{{\small\tt sunk@ieee.org}}
\date{}
\begin{document}
\maketitle

\begin{abstract}
A key technique of machine learning is to embed discrete weighted graphs into continuous spaces for further downstream analysis.
Embedding discrete hierarchical structures in hyperbolic geometry has proven very successful since it was shown that any weighted tree can be embedded in that geometry with arbitrary low distortion.
Various optimization methods for hyperbolic embeddings based on common  models of hyperbolic geometry have been studied.
In this paper, we consider the Hilbert geometry of the standard simplex which is isometric to a vector space equipped with a symmetric polytope norm.
We study the representation power of this Hilbert simplex geometry by embedding distance matrices of graphs using a fast differentiable approximation of the Hilbert metric distance.
Our findings demonstrate that Hilbert simplex geometry is competitive to alternative geometries such as the Minkowski hyperboloid model of hyperbolic geometry or the Euclidean geometry for embedding tasks while being fast and numerically robust.
\end{abstract}

\section{Introduction}

Since Sarkar~\cite{Sarkar-2011} proved that any weighted tree graph can be embedded as a Delaunay subgraph of points in hyperbolic geometry   embedding nodes with arbitrary small distortions,
hyperbolic embeddings have become widely popular in machine learning~\cite{HE-2018} and computer vision~\cite{HIE-2020} to represent various hierarchical structures~\cite{HE-Structure-2021}.
Various models~\cite{HVD-2014} of hyperbolic geometry have been considered from the viewpoint of time efficiency and numerical stability~\cite{HE-2020} (e.g., Poincar\'e model~\cite{HE-Poincare-2017}, Minkowski hyperboloid model~\cite{spacetime,HE-Hyperboloid-2020}, Klein model~\cite{HME-Klein2020}, Lorentz model~\cite{LorentzHE-2018}, etc.)
 and extensions to symmetric matrix spaces~\cite{SymmetricSpace-2021} have also been considered recently.

 In this work, we consider Hilbert geometry~\cite{HandbookHilbert-2014} which can be seen as a generalization of Klein model of hyperbolic geometry where the unit ball domain is replaced by an arbitrary open bounded convex domain $\Omega$.
 When the boundary $\partial\Omega$ is smooth,  Hilbert geometry is of hyperbolic type (e.g., Cayley-Klein geometry~\cite{Richter-2011,shao2017machine} when $\Omega$ is an ellipsoid). When the domain is a bounded polytope, Hilbert geometry is  bilipschitz quasi-isometric to a normed vector space~\cite{HGPolytope-2014}, and isometric to a vector space with a polytope norm only when $\Omega$ is a simplex~\cite{HilbertHarpe-1991}.
 It is thus interesting to consider Hilbert  simplex geometry for embeddings and compare its representation performance to hyperbolic embeddings.
Hilbert simplex geometry has been used in machine learning for clustering histograms or correlation matrices~\cite{nielsen2019clustering}.

The paper is organized as follows:
 In \S\ref{sec:HSG}, we present the Hilbert distance as a symmetrization of the oriented Funk weak distances, describe some properties of the Hilbert simplex distance, and illustrate qualitatively the ball shapes for the Funk and Hilbert distances~\cite{HG-SoCG-2017}.
 We first consider Hilbert simplex linear embeddings
 and prove that Hilbert simplex distance is a non-separable monotone distance (Theorem~\ref{thm:FunkHilbertMonotone} in \S\ref{sec:monotone}).
 Monotonicity of distances is an essential property: It states that the distance can only decrease by linear embeddings into  smaller dimensional spaces.
 In information geometry, separable monotone divergences are exactly the class of $f$-divergences~\cite{amari2016information}.
 Aitchison non-separable distance used in compositional data analysis was  proven monotone~\cite{CODA-2021} only recently.
We explain a connection between Aitchison distance and Hilbert distance by using the variation semi-norm in \S\ref{sec:isometryVS}.
 Section~\ref{sec:exp} presents our experiments which demonstrates that in practice  Funk and Hilbert non-linear embeddings outperforms or is competitive compared to various other distances (namely, Euclidean/Aitchison distance, $\ell_1$-distance, hyperbolic Poincar\'e distance) while being fast and robust to compute.
Section~\ref{sec:concl} concludes this work.

\section{Hilbert simplex geometry}\label{sec:HSG}

\subsection{Definition}

Let $\Omega$ be any open bounded convex set of $\Re^d$.
The Hilbert distance~\cite{Hilbert-1895,BH-2014,HandbookHilbert-2014} $\rhohg^{\Omega}(p,q)$ between two points $p,q\in\Omega$ induced by $\Omega$
is defined as the symmetrization of the Funk distance $\rhofd^\Omega(p,q)$.
The Funk distance~\cite{FunkHilbert-2014} is defined by
$$
\rhofd^\Omega(p,q) :=\left\{
\begin{array}{ll}
 \log\left( \frac{\|p-\bar{q}\|}{\|q-\bar{q}\|} \right)\geq 0, & p\not=q,\\
0 & p=q.
\end{array}
\right.
$$
where $\bar{q}$ denotes the intersection of the affine ray $R(p,q)$
emanating from $p$ and passing through $q$ with the domain boundary $\partial\Omega$.
See Figure~\ref{fig:FunkSimplex} for an illustration.
The Funk distance is a weak metric distance since it satisfies the triangle inequality of metric distances but is an asymmetric dissimilarity measure: $\rhofd^{\Omega}(p,q)\not=\rhofd^{\Omega}(q,p)$.

\begin{figure}[t!]
\centering
\includegraphics[width=.5\columnwidth]{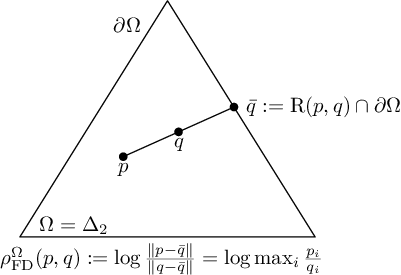}
\caption{Funk distance defined in the open standard simplex.\label{fig:FunkSimplex}}
\end{figure}

Thus the Hilbert distance $\rhohg^\Omega(p,q)$ between any two points $p,q\in\Omega$ is:
\begin{eqnarray*}
\rhohg^\Omega(p,q) &:=& \rhofd^\Omega(p,q)+\rhofd^\Omega(q,p),\\
&=& \left\{
\begin{array}{ll}
\log
\frac{\|p-\bar{q}\| \, \|q-\bar{p}\|}
{\|p-\bar{p}\| \, \|q-\bar{q}\|}
, & p\not=q,\\
0 & p=q.
\end{array}
\right.
\end{eqnarray*}
where $\bar{p}$ and $\bar{q}$ are the two intersection points of the line $(pq)$ with $\partial\Omega$, and the four collinear points are arranged in the order $\bar{p},p,q,\bar{q}$.
The $d$-dimensional Hilbert distance can also be interpreted as a 1D Hilbert distance induced by the 1D interval domain $\Omega_{pq}:=\Omega\cap(pq)$:
$$
\rhohg^\Omega(p,q) = \rhohg^{\Omega_{pq}}(p,q).
$$
This highlights that the quantity
$\frac{\|p-\bar{q}\| \, \|q-\bar{p}\|}%
{\|p-\bar{p}\| \, \|q-\bar{q}\|}$
does not depend on the chosen norm $\|\cdot\|$ because we can consider the absolute value $|\cdot|$ on the domain $\Omega_{pq}$.
For any $x$, $y\in\Re$,
$\|x\| = c |x|$ and $\|x-y\| = c |x-y|$ where $c>0$ is a constant.
Thus we can express the Hilbert distance as the logarithm of the cross-ratio:
$$
\rhohg^\Omega(p,q) = \left\{
\begin{array}{ll}
\log \CR(\bar{p},p;q,\bar{q}), & p\neq{}q,\\
0 & p=q,
\end{array}
\right.
$$
where $\CR(\bar{p},p;q,\bar{q}):=\frac{\|p-\bar{q}\| \, \|q-\bar{p}\|}{\|p-\bar{p}\| \, \|q-\bar{q}\|}$ denotes the cross-ratio.
The Hilbert distance is a metric distance, and
it follows from the properties of the cross-ratio~\cite{Richter-2011} that straight lines are geodesics in Hilbert geometry:
$$
\forall r\in[pq],\quad \rhohg^\Omega(p,q)=\rhohg^\Omega(p,r)+\rhohg^\Omega(r,q),
$$
where $[pq]$ is the closed line segment connecting $p$ and $q$.

Another property is that the Hilbert distance is invariant under homographies~\cite{hartley2003multiple} $H$ also called collineations (projective invariance):
$$
\rhohg^{H\Omega}(Hp,Hq)=\rhohg^{\Omega}(p,q),
$$
where $H\Omega:=\{Hp\ :\ p\in\Omega\}$.
The Hilbert geometry of the complex Siegel ball generalizing the Klein ball has been studied in~\cite{nielsen2020siegel}.

\subsection{Hilbert simplex distance}

We shall consider $\Omega=\Delta_d$, the open $(d-1)$-dimensional simplex:
$$
\Delta_d:=\left\{(x_1,\ldots,x_d)\in\bbR_{++}^d\ :\ \sum_{i=1}^d  x_i=1\right\},
$$
where $\Re_{++}:=(0,\infty)$.

In that case, we write $\rhofd(p,q):=\rhofd^{\Delta_d}(p,q)$, and we have
\begin{equation}
\rhofd(p,q)=\log \max_{i\in\{1,\ldots,d\}} \frac{p_i}{q_i}.
\end{equation}
Thus the Hilbert distance induced by the standard simplex is
\begin{eqnarray}
 \rhohg(p,q) &=& \rhofd(p,q)+\rhofd(q,p)\nonumber\\
 &=&\log \max_{i\in\{1,\ldots,d\}} \frac{p_i}{q_i}\max_{i\in\{1,\ldots,d\}} \frac{q_i}{p_i}\nonumber\\
&=& \log \frac{ \max_{i\in\{1,\ldots,d\}}\frac{p_i}{q_i}}{\min_{i\in\{1,\ldots,d\}} \frac{p_i}{q_i}}.\label{eq:rhohd}
\end{eqnarray}

\begin{property}
We can compute efficiently the Hilbert simplex distance in $\Delta_d$ in optimal $O(d)$ time.
\end{property}

The Hilbert simplex geometry yields a complete metric space which is a geodesic space, although the  geodesics are not unique in the Hilbert simplex geometry as depicted in Figure~\ref{fig:notgeodesic}.

\begin{figure}[tb]
\centering
\includegraphics[width=.5\columnwidth]{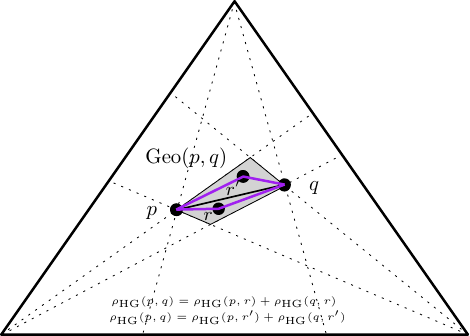}

\caption{Non-uniqueness of geodesics in the Hilbert simplex geometry:
The quadrilateral region $\mathrm{Geo}(p,q)$ denotes the set of points $r$ satisfying the triangle equality with respect to $p$ and $q$:
 $\rhohg(p,q)=\rhohg(p,r)+\rhohg(q,r)$. The purple paths connecting $p$ and $q$ are examples of geodesics. \label{fig:notgeodesic}}
\end{figure}

Figure~\ref{fig:shape} displays the shapes of balls with respect to the oriented Funk distances and the symmetrized Hilbert distance. Balls in the Hilbert simplex geometry
    have Euclidean polytope shapes of constant combinatorial complexity (e.g., hexagons~\cite{HG-SoCG-2017} in 2D which show that balls are convex but not strictly convex).
Since at infinitesimal scale, balls have polygonal shapes, it shows that the Hilbert simplex geometry is not Riemannian.
However, Hilbert geometries can be studied from the Finslerian point of view~\cite{troyanov2013funk} which generalizes Riemannian geometries.

\begin{figure}[tbh]
\centering
\begin{subfigure}{.98\columnwidth}
\includegraphics[width=\columnwidth]{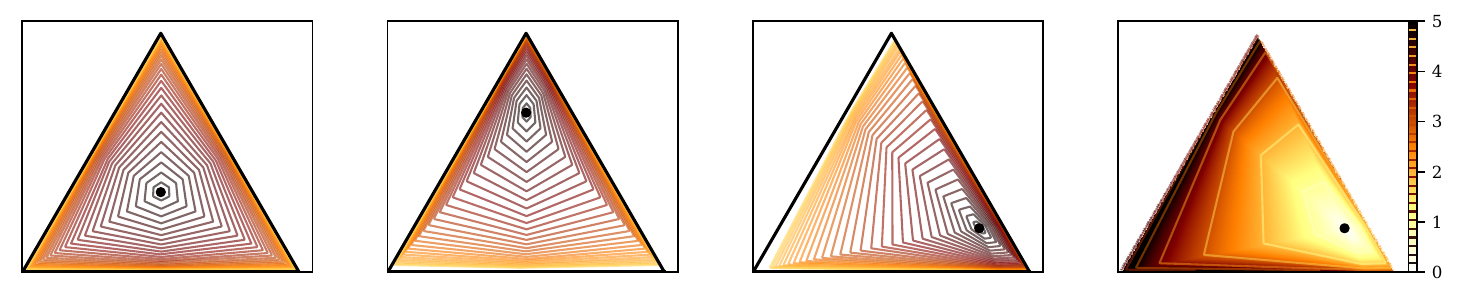}
\caption{$\rhohg(p,c)$}
\end{subfigure}
\begin{subfigure}{.98\columnwidth}
\includegraphics[width=\columnwidth]{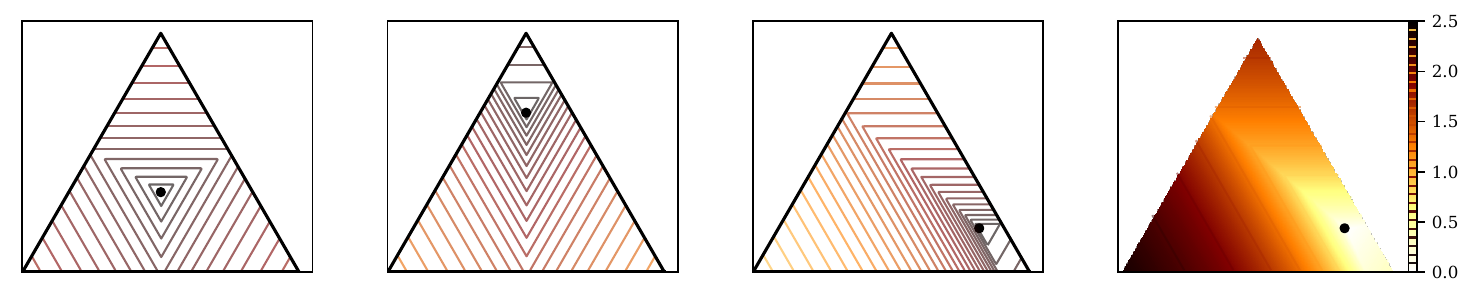}
\caption{$\rhofd(p,c)$}
\end{subfigure}
\begin{subfigure}{.98\columnwidth}
\includegraphics[width=\columnwidth]{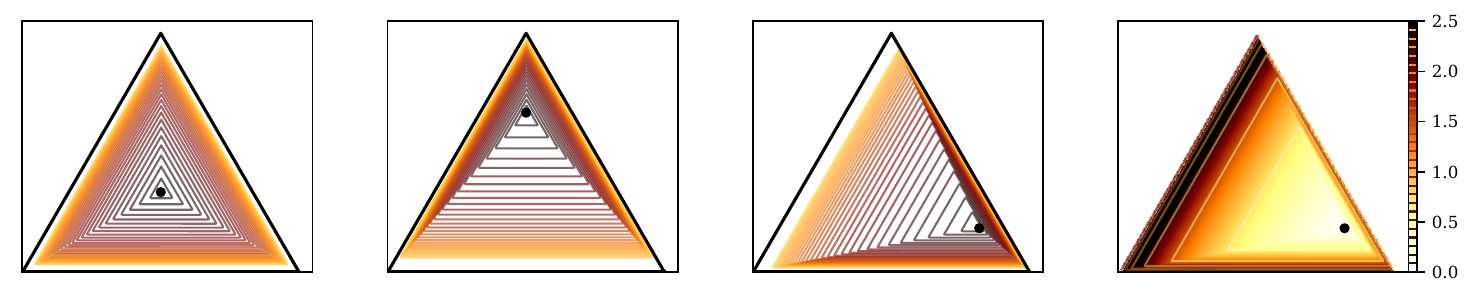}
\caption{$\rhofd(c,p)$}
\end{subfigure}

\caption{
    Balls centered at $c\in\Delta_2$ with constant radius increment step. The last column also shows the distance color maps (dark color means long distance).}%
\label{fig:shape}%
\end{figure}

\subsection{Monotone distance}\label{sec:monotone}

Let $\calX=\{X_1,\ldots, X_m\}$ be a partition of $\{1,\ldots, d\}$ into $m\leq d$ pairwise disjoint subsets $X_i$'s.
For $p\in\Delta_d$, let $p_{|\calX}\in\Delta_m$ denote the reduced dimension point with $p_{|\calX}[i]=\sum_{j\in X_i} p[i]$.
A distance $D(p,q)$ is said \emph{monotone}~\cite{amari2016information} iff
$$
D\left(p_{|\calX},q_{|\calX}\right)
\leq
D(p,q), \quad\forall\calX, \forall p,q\in\Delta_d.
$$

A distance is said \emph{separable} iff it can be expressed as a sum of scalar distances.
For example, the Euclidean distance is not separable but the squared Euclidean distance is separable.
The only separable monotone distances are $f$-divergences~\cite{amari2016information} when $d>2$.
The special curious case $d=2$ is dealt in~\cite{jiao2014information}.
We can interpret points in the simplex $\Delta_d$ as categorical distributions on a sample space of $d$ outcomes. Hence, the Hilbert statistical distance can also be said information monotone~\cite{amari2016information}.

We shall prove that the Funk oriented distance and the Hilbert distance are non-separable monotone distances.

\begin{lemma}\label{thm:funkmono}
    Let $p,q\in\Delta_d$. Let $\tilde{p}=(p_1+p_2,p_3,\cdots,p_d)$ and
    $\tilde{q}=(q_1+q_2,q_3,\cdots,q_d)$ denote their coarse-grained points on
    $\Delta_{d-1}$. We have $0\le \rhofd(\tilde{p},\tilde{q}) \le \rhofd(p,q)$.
\end{lemma}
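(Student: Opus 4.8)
The plan is to discard the geometric definition of the Funk distance entirely and work with the closed form $\rhofd(p,q)=\log\max_{i}\frac{p_i}{q_i}$ recorded just above, so that the inequality becomes an elementary fact about maxima of ratios of positive numbers. Spelling out the two sides, the desired bound $\rhofd(\tp,\tq)\le\rhofd(p,q)$ is equivalent, after exponentiating, to
\[
\max\!\left(\frac{p_0+p_1}{q_0+q_1},\ \frac{p_2}{q_2},\ \ldots,\ \frac{p_d}{q_d}\right)\ \le\ \max\!\left(\frac{p_0}{q_0},\ \frac{p_1}{q_1},\ \frac{p_2}{q_2},\ \ldots,\ \frac{p_d}{q_d}\right).
\]
Since the two lists of ratios agree on coordinates $2,\ldots,d$, it suffices to bound the single merged coordinate $\frac{p_0+p_1}{q_0+q_1}$ by $\max\!\left(\frac{p_0}{q_0},\frac{p_1}{q_1}\right)$.

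The key step is the mediant inequality: for positive reals $a,b,c,e$,
\[
\min\!\left(\frac ab,\frac ce\right)\ \le\ \frac{a+c}{b+e}\ \le\ \max\!\left(\frac ab,\frac ce\right),
\]
which holds because $\frac{a+c}{b+e}=\frac{b}{b+e}\cdot\frac ab+\frac{e}{b+e}\cdot\frac ce$ is a convex combination of $\frac ab$ and $\frac ce$ with strictly positive weights summing to $1$. All denominators here are strictly positive since $p,q\in\Delta_d\subset\bbR_{++}^d$. Applying the right-hand inequality with $a=p_0$, $b=q_0$, $c=p_1$, $e=q_1$ gives $\frac{p_0+p_1}{q_0+q_1}\le\max\!\left(\frac{p_0}{q_0},\frac{p_1}{q_1}\right)$; substituting this into the displayed inequality above and taking logarithms yields $\rhofd(\tp,\tq)\le\rhofd(p,q)$. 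The lower bound $0\le\rhofd(\tp,\tq)$ is then immediate: either invoke that $\rhofd$ is nonnegative on any simplex (as noted at the definition, $\rhofd^{\Omega}\ge 0$) applied to $\Delta_{d-1}$, or observe directly that $\sum_i\tp_i=\sum_i\tq_i=1$ forces $\max_i\tp_i/\tq_i\ge 1$.

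I do not expect a genuine obstacle; the lemma is essentially the mediant inequality in disguise, and the only care needed is the trivial bookkeeping of indices. The remaining content toward Theorem~\ref{thm:FunkHilbertMonotone} is routine: the general coarse-graining along an arbitrary partition $\calX$ of $\{1,\ldots,d\}$ is obtained by composing finitely many binary merges of the kind handled here, and the monotonicity of the symmetrized Hilbert distance follows from writing $\rhohg=\rhofd(p,q)+\rhofd(q,p)$ and applying the lemma to each of the two (asymmetric) terms, each of which does not increase under merging. Non-separability is a separate, easy observation (it already follows from the $O(d)$-time max/min form, which is not a sum of scalar terms), so no new idea is required there.
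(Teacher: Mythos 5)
Your proof is correct and takes essentially the same route as the paper: both arguments reduce the claim to showing that the merged ratio $\frac{p_0+p_1}{q_0+q_1}$ is bounded by $\max\left(\frac{p_0}{q_0},\frac{p_1}{q_1}\right)$, which the paper proves by writing $p_0\le\iota q_0$, $p_1\le\iota q_1$ with $\iota$ the maximum, and you prove via the equivalent convex-combination (mediant) form. No gap; the remarks on the $0\le\rhofd(\tilde{p},\tilde{q})$ bound and on extending to Theorem~\ref{thm:FunkHilbertMonotone} also match the paper's reasoning.
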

\begin{proof}
    Denote $\iota=\max\{p_1/q_1, p_2/q_2\}$. As $q_1,q_2>0$, we have
    $p_1 \le \iota q_1$ and $p_2 \le \iota q_2$. Therefore
    \begin{equation*}
        \frac{p_1+p_2}{q_1+q_2} \le \frac{\iota q_1+ \iota q_2}{q_1+q_2} = \iota.
    \end{equation*}
    It follows that
\begin{align*}
        &\max\left\{
            \frac{p_1+p_2}{q_1+q_2}, \frac{p_3}{q_3}, \cdots, \frac{p_d}{q_d}
        \right\}\\
        &\hspace{5em}\le\;
        \max\left\{
            \frac{p_1}{q_1}, \frac{p_2}{q_2}, \frac{p_3}{q_3}, \cdots, \frac{p_d}{q_d}
        \right\}.
\end{align*}
    Hence
    \begin{equation*}
        \log
        \max\left\{
            \frac{p_1+p_2}{q_1+q_2}, \frac{p_3}{q_3}, \cdots, \frac{p_d}{q_d}
        \right\}
        \le \log\max_i \frac{p_i}{q_i}.
    \end{equation*}
    By the definition of the Funk distance, we get $\rhofd(\tilde{p},\tilde{q}) \le \rhofd(p,q)$.
\end{proof}

\begin{theorem}\label{thm:FunkHilbertMonotone}
    The Funk distance $\rhofd$ and the Hilbert distance $\rhohg$ in $\Delta_d$ satisfy the information monotonicity.
\end{theorem}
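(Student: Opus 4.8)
The plan is to bootstrap from Lemma~\ref{thm:funkmono}, which handles the elementary coarse-graining that merges two designated coordinates, to the general statement for an arbitrary partition $\calX=\{X_1,\ldots,X_m\}$ of $\{1,\ldots,d\}$. The key preliminary observation is that the closed form $\rhofd(p,q)=\log\max_i p_i/q_i$ is invariant under a simultaneous permutation of the coordinates of $p$ and $q$, since the maximum ranges over all indices. Hence Lemma~\ref{thm:funkmono} is not special to the first two coordinates: for any two distinct indices $i,j$, replacing the pair $(p_i,p_j)$ by $p_i+p_j$ and $(q_i,q_j)$ by $q_i+q_j$ cannot increase $\rhofd$. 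Moreover the merged points still lie in an open simplex of one lower dimension, because both positivity of the entries and the sum-to-one constraint are preserved, so the Funk distance remains well defined at every stage.

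Next I would reduce the general coarse-graining to a composition of such elementary merges. Any partition $\calX$ is reachable from the finest partition $\{\{1\},\ldots,\{d\}\}$ by a finite sequence of two-block merges: pick any block of size at least two, fuse two of its elements, and repeat until the current partition equals $\calX$. Applying the permutation-invariant form of Lemma~\ref{thm:funkmono} at each step and chaining the resulting inequalities yields $0\le\rhofd(p_{|\calX},q_{|\calX})\le\rhofd(p,q)$ for every $\calX$ and every $p,q\in\Delta_d$, which is exactly information monotonicity for the Funk distance. For the Hilbert distance I would then use $\rhohg(p,q)=\rhofd(p,q)+\rhofd(q,p)$ together with the fact that coarse-graining acts identically on both arguments, so that $\rhohg(p_{|\calX},q_{|\calX})=\rhofd(p_{|\calX},q_{|\calX})+\rhofd(q_{|\calX},p_{|\calX})\le\rhofd(p,q)+\rhofd(q,p)=\rhohg(p,q)$, invoking the Funk monotonicity once for the ordered pair $(p,q)$ and once for $(q,p)$. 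Non-separability of both distances is immediate from their $\max$/$\min$ form, which does not split as a sum of per-coordinate terms when $d>2$.

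The only content beyond Lemma~\ref{thm:funkmono} is this reduction to elementary merges, and the main point requiring care is the index bookkeeping: since the lemma is phrased for one fixed pair of coordinates, one must explicitly record the permutation invariance of $\rhofd$ before iterating, and check at each intermediate step that the coarse-grained pair still lies in the relevant open simplex. Neither point is deep, so I expect the full proof to be quite short.
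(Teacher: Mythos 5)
Your proposal is correct and follows essentially the same route as the paper: the paper likewise deduces the theorem from Lemma~\ref{thm:funkmono} by observing that any coarse-graining can be realized recursively as a sequence of two-bin merges, and then obtains the Hilbert case by writing $\rhohg=\rhofd(p,q)+\rhofd(q,p)$ and using that a sum of monotone distances is monotone. Your added care about permutation invariance and staying inside the open simplex at each intermediate step is a reasonable elaboration of what the paper leaves implicit, but it is not a different argument.
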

The proof is straightforward from Lemma~\ref{thm:funkmono} by noting that any
coarse-grained point can be recursively defined by merging two bins.
Since the sum of two information monotone distances is monotone, we get the proof that Hilbert distance as the sum of the forward and reverse Funk (weak) metric is monotone.

In fact, we can also prove this result by using Birkhoff's contraction mapping theorem~\cite{birkhoff1957extensions}.
We can represent the coarse-graining mapping $p\mapsto p_{|\calX}$ by a linear application with a $m\times d$ matrix $A$ with columns  summing  up to one (i.e., positive column-stochastic matrix):
$$
p_{|\calX} = A p.
$$
Then we have~\cite{birkhoff1957extensions}:
$$
\rhohg(Ap,Aq)\leq \tanh\left(\frac{1}{4}\Delta(A)\right)\, \rhohg(p,q),
$$
where $\Delta(A)$ is called the projective diameter of the positive mapping $A$:
$\Delta(A) := \sup\{\rhohg(Ap,Aq)\ :\ p,q\in\bbR_{++}^d\}$.

Since $0\leq \tanh(x)\leq 1$ for $x\geq 0$, we get the property that  Hilbert distance on the probability simplex is a monotone non-separable distance:
$\rhohg(p_{|\calX},q_{|\calX})\leq \rhohg(p,q)$.
Note that Birkhoff's contraction theorem is also used to prove the convergence of Sinkhorn's algorithm~\cite{peyre2019computational} used in entropy-regularized optimal transport.

Hilbert distance of Eq.~\ref{eq:rhohd} can be extended to the positive orthant cone $\bbR_{++}^d$ which can be foliated by homothetic simplices $\lambda\Delta_d$: $\bbR_{++}^d=\cup_{\lambda>0} \lambda\Delta_d$:
\begin{equation}
\rhohg(\tilde p,\tilde q)=\log \frac{ \max_{i\in\{1,\ldots,d\}}\frac{\tilde{p}_i}{\tilde{q}_i}}{\min_{i\in\{1,\ldots,d\}} 
\frac{\tilde{p}_i}{\tilde{q}_i}}, \quad\tilde{p},\tilde{q}\in\bbR_{++}^d.
\end{equation}
This extended Hilbert distance is projective because $\rhohg(\alpha\tilde p,\beta\tilde q)=\rhohg(\tilde p,\tilde q)$.
Thus the Hilbert distance is a projective distance between rays $\tilde{p}$ and $\tilde{q}$ and a metric distance on  $\lambda\Delta_d$
 for any prescribed value of $\lambda>0$.
This Hilbert projective distance is also called the Birkhoff distance~\cite{BH-2014} in the literature.
Furthermore, any projective distance with strict contraction by linear transformation is provably a scalar function of the Hilbert's projective metric, and Hilbert projective distance is shown to have the lowest  possible contraction bound among proper cone projective distances~\cite{UniversalHilbertProjectiveMetric-1982}.

The Aitchison distance~\cite{pawlowsky2011compositional} is also a non-separable distance in the standard simplex defined as follows:
\begin{equation}\label{eq:AD}
\rho_{\mathrm{Aitchison}}(p,q)
\eqdef
\sqrt{\sum_{i=1}^d \left( \log \frac{p_i}{G(p)}-\log \frac{q_i}{G(q)} \right)^2},
\end{equation}
where $G(p)$ denotes the geometric mean of the coordinates of $p\in\Delta_d$:
$$
G(p)=\left(\prod_{i=1}^d p_i\right)^{\frac{1}{d}}=\exp\left(\frac{1}{d}\sum_{i=1}^d \log p_i\right).
$$
The Aitchison distance is non-separable and satisfies the monotonicity property~\cite{CODA-2021}.

\subsection{Isometry to a normed vector space}\label{sec:isometryVS}

Hilbert geometry is never a Hilbert space (i.e., complete metric space induced by the inner product of a vector space) because the convex domain $\Omega$ is bounded.
It can be shown that the only domains $\Omega$ yielding an isometry of the Hilbert geometry to a normed vector space are simplices~\cite{QuasiIsometryNormedVectorSpace-HilbertGeometry-2008,foertsch2005hilbert}.

We recall the isometry~\cite{HilbertHarpe-1991} of the standard simplex to a normed vector space $(V_d,\|\cdot\|_\NH)$.
Let $V_d=\{v\in\bbR^{d} \st \sum_{i=1}^d v_i=1\}$ denote the $(d-1)$-dimensional vector space sitting in $\bbR^{d}$.
Map a point $p=(p_1,\ldots,p_{d})\in\Delta_d$ to a point $v(p)=(v_1,\ldots, v_{d})\in V_d$ as follows:
\begin{eqnarray*}
v_i
&=& \frac{1}{d} \left((d-1)\log p_i -\sum_{j\neq{i}}\log p_j \right),\\
&=& \log p_i - \frac{1}{d}\sum_{j=1}^d \log\ p_j=\log\frac{p_i}{G(p)}.
\end{eqnarray*}

We define the corresponding norm $\|\cdot\|_\NH$ in $V_d$ by considering the shape of its unit ball
$$
B_V=\{v\in V_d \st |v_i-v_j|\leq 1, \forall i\not =j\}.
$$
The unit ball $B_V$ is a symmetric convex set containing the origin in its interior, and thus yields a {\em polytope norm}
 $\|\cdot\|_\NH$ (Hilbert norm) with $2\binom{d}{2}=d(d-1)$ facets.
 Norms $\ell_1$ and $\ell_\infty$ yield hypercube balls with $2d$ facets and $2^d$ vertices.
Reciprocally, let us notice that a norm induces a unit ball centered at the origin that is convex and symmetric around the origin.
The distance in the normed vector space between $v\in V_d$ and $v'\in V_d$ is defined by:
\begin{equation*}
\rho_V(v,v')= \|v-v'\|_\NH = \inf \left\{ \tau \st v'\in \tau(B_V\oplus \{v\}) \right\},
\end{equation*}
where $A\oplus B=\{a+b \st a\in A,b\in B\}$ is the Minkowski sum of sets.
Figure~\ref{fig:boundnorm} illustrates the balls centered at the origin with respect to the polytope norm $\|\cdot\|_\NH$.

\begin{figure}
\includegraphics[width=\columnwidth]{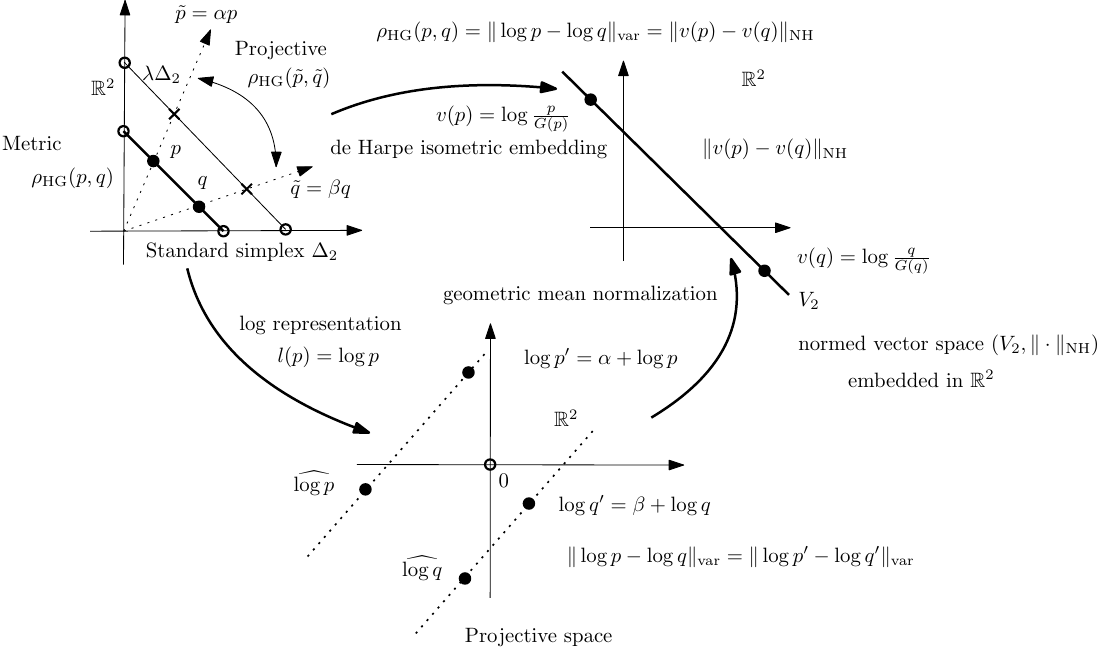}
\caption{Different representations of the simplex and positive orthant cone.}\label{fig:simplextransfer}
\end{figure}

\def\bbL{\mathbb{L}}

Let $l(p)=(\log p_1,\ldots, \log p_d) \in\bbR^d$ be the logarithmic mapping
and $\bbL_d=\{l(p) \ :\ p\in\Delta_d \}$.
We have
$$
\rho_\HG(p,q) = \|l(p)-l(q)\|_\var = \|l(\tilde{p})-l(\tilde{q})\|_\var,
$$
for any $\alpha,\beta\in\bbR$ with $\tilde{p}=\alpha p$, $\tilde{q}=\beta q$, and where 
$$
\|x\|_\var:=\max_i x_i-\min_i x_i=\|x\|_{+\infty}-\|x\|_{-\infty}
$$ 
is the variation semi-norm.
$\|\cdot\|_\var$ is only a {\em semi-norm} because we have
$\|(\lambda,...,\lambda)\|_\var =0$ for any $\lambda\in\bbR$.

Thus to convert from $\bbL$ to $\Delta_d$, we need to find the representative
element of the equivalence class $\hat{l}$ of $l$:
normalize $l\in\bbL$ by 
$$
p(l)=\hat{l}=\exp(l)/\sum_{i=1}^d e^{l_i}.
$$
Then we convert $\hat{l}$ to $v(\hat{l})$ by choosing translation $-\log
G(\hat{l})$, where $G$ is the geometric mean:
\begin{equation}
v(l)
=
\left(
l_1-\log G\left(\frac{e^l}{\sum_{i=1}^d e^{l_i}}\right),
\ldots,
l_d-\log G\left(\frac{e^l}{\sum_{i=1}^d e^{l_i}}\right)
\right).
\end{equation}

Thus we have
\begin{eqnarray*}
\lefteqn{\rho_\HG(p,q)=\rho_\HG(\tilde{p},\tilde{q})
=\|\log\tilde{p}-\log\tilde{q}\|_\var}\nonumber\\
&= \|v(p)-v(q)\|_{\NH}
= \|l(\hat{p})-l(\hat{q})\|_{\NH}.
\end{eqnarray*}

Therefore
$$
\|l-l'\|_\var = \|v(\hat{l})-v(\hat{l'})\|_{\NH}.
$$
Figure~\ref{fig:simplextransfer} illustrates different transformations of the
simplex space included in the positive orthant cone.

\begin{figure}
\centering
\includegraphics[width=0.5\columnwidth]{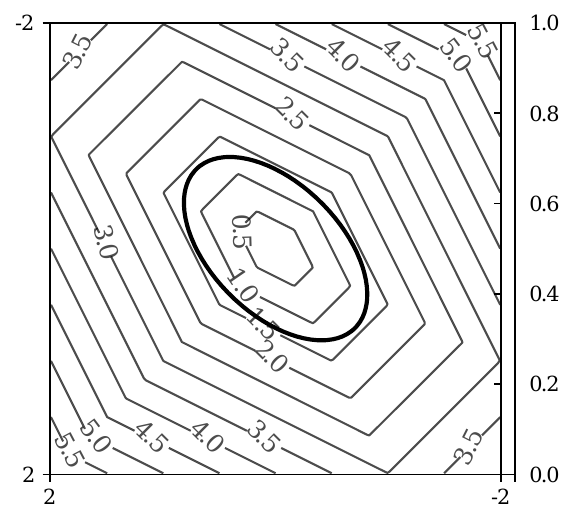}
\caption{Polytope balls $B_V$ and the Euclidean unit ball $B_E$ shown on the 2D slanted plane
$V^3=\{v\in\mathbb{R}^3\ :\ \sum_{i=1}^3 v^i=0\}$ of $\bbR^3$.
}\label{fig:boundnorm}
\end{figure}

The reverse map from the normed space $V_d$ to the standard simplex $\Delta_d$ is given by
the softmax function:
\begin{equation*}
p_i = \frac{\exp({v_i})}{\sum_j \exp(v_j)}.
\end{equation*}

Thus we have $(\Delta_d,\rho_\HG)\cong (V_d,\|\cdot\|_\NH)$.
In 1D, $(V^2,\|\cdot\|_\NH)$ is isometric to the Euclidean line.

Now, let us notice that coordinate $v_i$ can be rewritten as
$$
v_i=\log \frac{p_i}{G(p)},
$$
where $G(p)$ is the coordinate geometric means.
Recalling that the Hilbert simplex distance is a projective distance on the positive orthant cone domain:
\begin{eqnarray*}
\rho_{\HG}(p,q) &=& \log \frac{\max _{i\in\{1,\ldots, d\}} \frac{p_{i}}{q_{i}}}{\min _{j\in\{1,\ldots, d\}} \frac{p_{j}}{q_{j}}},\\
&=&
\rho_{\HG}(\lambda p,\lambda' q), \quad\forall \lambda>0,\lambda'>0.
\end{eqnarray*}

Thus we have:
\begin{eqnarray*}
\rho_{\HG}(p,q) &=& \|\log p-\log q\|_{\var},\\
&=& \|\log (\lambda p)-\log (\lambda' q)\|_{\var}, \quad\forall \lambda>0,\lambda'>0.
\end{eqnarray*}

Choose $\lambda=\frac{1}{G(p)}$ and $\lambda'=\frac{1}{G(q)}$ to get
$$
\rho_{\HG}(p,q) = \left\|\log \frac{p}{G(p)}-\log \frac{q}{G(q)}\right\|_{\var}.
$$

This highlights a nice connection with the Aitchison distance of Eq.~\ref{eq:AD}:
\begin{align}
\rhohg(p,q)
&= \left\Vert \log\frac{p}{G(p)}-\log\frac{q}{G(q)} \right\Vert_{\var},\\
\rho_{\mathrm{Aitchison}}(p,q)
&= \left\Vert \log\frac{p}{G(p)}-\log\frac{q}{G(q)} \right\Vert_{2}.
\end{align}

Thus both the Aitchison distance and the Hilbert simplex distance are normed distances on the representation
$$
p\mapsto \log\frac{p}{G(p)}=\left(\log\frac{p_1}{G(p)},\ldots,\frac{p_d}{G(p)}\right).
$$
Notice that since the geometric mean is homogeneous, i.e., we have 
$\log \frac{p}{G(p)} =\log \frac{\lambda p}{G(\lambda p)}, \forall \lambda>0$.

Figure~\ref{fig:VoronoiAitchisonHilbertVarNorm} displays the Voronoi diagram of $n=16$ points in the probability simplex with respect to the Aitchison distance (Figure~\ref{fig:VoronoiAitchisonHilbertVarNorm}, left), and the Hilbert simplex distance (Figure~\ref{fig:VoronoiAitchisonHilbertVarNorm}, middle) and its equivalent variation norm space by logarithmic embedding (Figure~\ref{fig:VoronoiAitchisonHilbertVarNorm}, right).
See also~\cite{VoronoiHilbert-2023}.

The Hilbert simplex bisectors are piecewise linear in the normalized logarithmic representation since they are induced by the  polyhedral semi-norm   $\|.\|_\var$, and thus are structurally more complex than the Aitchison bisectors
which are linear/affine in the $\log x/G(x)$ representation:
Aitchinson Voronoi diagram can be derived from an ordinary Euclidean Voronoi diagram~\cite{boissonnat1998voronoi}.

\begin{figure}
\centering
\includegraphics[width=.95\columnwidth]{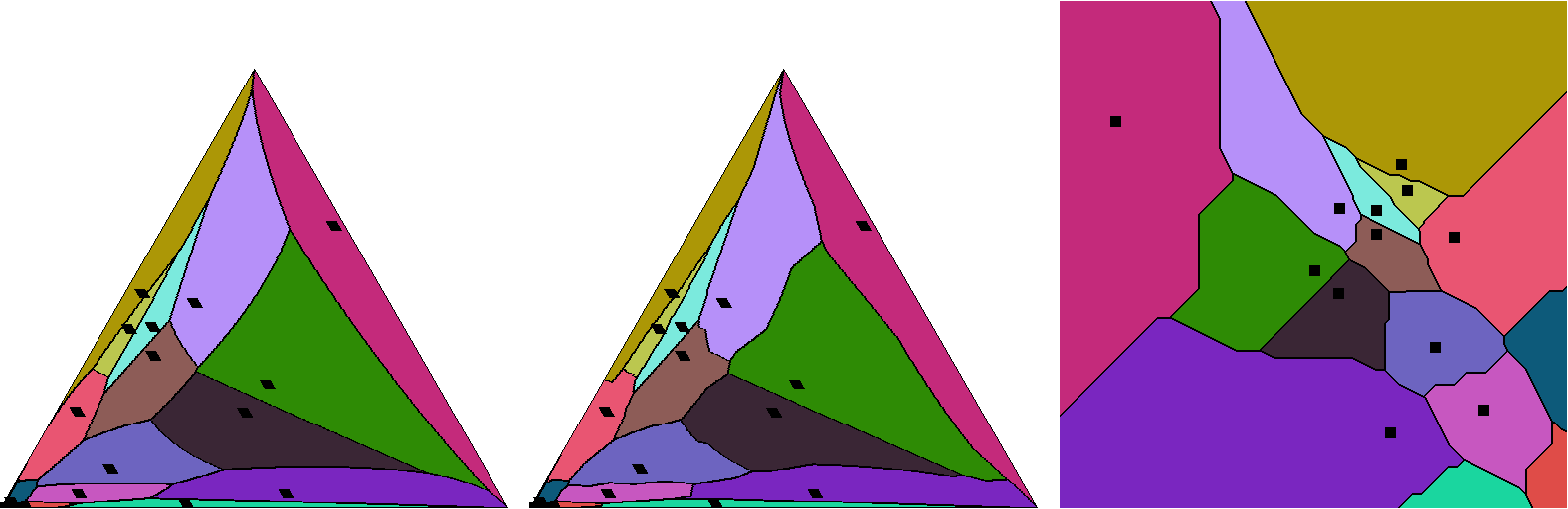}
\caption{Voronoi diagram in the probability simplex with respect to the Aitchison distance (left), Hilbert simplex distance (middle) and equivalent variation-norm induced distance on normalized logarithmic representations.}\label{fig:VoronoiAitchisonHilbertVarNorm}

\end{figure}

\subsection{Differentiable approximation}
\def\LSE{\mathrm{LSE}}
\def\LSET{{\mathrm{LSE}^T}}

\begin{figure*}[tbhp]
\centering
\includegraphics[width=.95\textwidth]{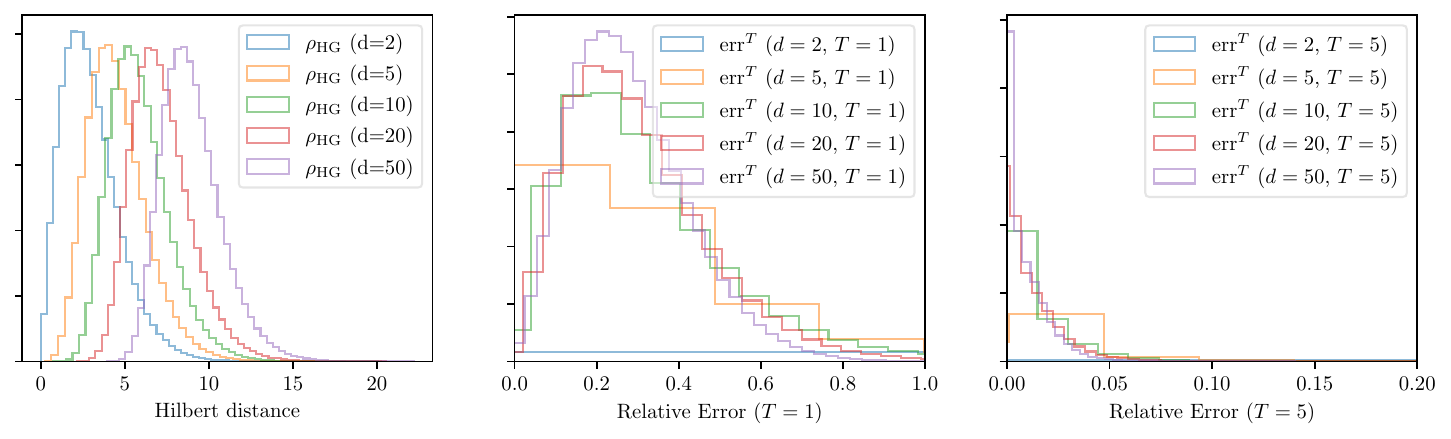}
\caption{Histograms of Hilbert distances and the relative errors
on $10^6$ pairs of uniform random points in $\Delta_d$.}
\label{fig:bound}
\end{figure*}

The Hilbert simplex distance is not differentiable because of the max operations.
However, since the logarithm function is strictly increasing, we can rewrite the Funk distance as
$$
\rhofd(p,q)=\log \max_i \frac{p_i}{q_i}=\max_i \log\frac{p_i}{q_i}.
$$
In machine learning, the log-sum-exp function
\begin{equation*}
\LSE(x_1,\ldots, x_d)
\;\eqdef\;
\log \left(\sum_{i=1}^d\exp(x_i)\right)
\end{equation*}
is commonly used to differentiably approximate the maximum operator.
In fact, one can use the general approximation formula
\begin{align*}
\LSET(x_1, \ldots, x_d)
&\;\eqdef\;
\frac{1}{T}
\LSE(Tx_1, \ldots, T x_d)\nonumber\\
&=
\frac{1}{T}
\log \left(\sum_{i=1}^d\exp(T x_i)\right),
\end{align*}
where $T>0$. For any $x\in\Re^d$, we have the approximation bounds
\begin{align}
\max_i x_i + \varepsilon_1(x, T)
& \;\le\;
\LSET(x_1,\ldots, x_d) \nonumber\\
& \;\le\;
\max_i x_i + \varepsilon_2(x, T),\label{eq:bounds}
\end{align}
where
\begin{align*}
\varepsilon_1(x,T)
&:=
\frac{1}{T}\log\left[
    1+(d-1) \exp(-T\| x \|_\var)
    \right],\nonumber\\
\varepsilon_2(x,T)
&:=
\frac{1}{T} \log\left[
    d-1 + \exp(-T\| x \|_\var)
    \right].
\end{align*}
Obviously, 
$$0<\varepsilon_1(x,T)\le\varepsilon_2(x,T)\le\frac{1}{T}\log{d}
$$
and both $\varepsilon_1(x,T)$ and $\varepsilon_2(x,T)$
tend to 0 as $T$ increases, making the approximation
$\LSET(x_1,\ldots, x_d)$ accurate.

Because $\forall{i}$, $x_i\ge \max_i x_i - \|x\|_\var$, we have
\begin{align*}
&\left(\sum_{i=1}^d\exp(T x_i)\right)
\ge
(d-1)\exp(T \max_i x_i - T \|x\|_\var)\nonumber\\
& \hspace{10em} + \exp(T \max_i x_i)\nonumber\\
&=
( (d-1) \exp( - T \|x\|_\var ) + 1 ) \exp(T \max_i x_i).
\end{align*}
Taking the logarithm on both sides gives
the first ``$\le$'' in Eq.~(\ref{eq:bounds}).
The proof of the second ``$\le$'' is similar.

Thus we have
\begin{align}\label{eq:fdbound}
\rhofd(p,q) + \varepsilon_1(r, T)
&\leq
\frac{1}{T}
\log\left(\sum_i \left(\frac{p_i}{q_i}\right)^T \right)
\nonumber\\
&\leq
\rhofd(p,q)
+ \varepsilon_2(r, T),
\end{align}
where $r_i=\log p_i - \log q_i$.

Hence, we may define a differentiable pseudo-distance by symmetrizing the $\LSET$ function:
\begin{equation}
 \tilde\rho_\LSET(p,q)= 
\frac{1}{T}
\log\left(\sum_i \left(\frac{p_i}{q_i}\right)^T \right)
\left(\sum_i \left(\frac{q_i}{p_i}\right)^T \right).\label{eq:lsetapprox}
\end{equation}

We can rewrite $\tilde\rho_\LSET(p,q)$ using the $\|\cdot\|_T$ norms as follows:
$$
\tilde\rho_\LSET(p,q)=\log \left\|\frac{p}{q}\right\|_T \, \left\|\frac{q}{p}\right\|_T,
$$
where $\frac{p}{q}=\left(\frac{p_1}{q_1},\ldots,\frac{p_d}{q_d}\right)$ and 
$\frac{q}{p}=\left(\frac{q_1}{p_1},\ldots,\frac{q_d}{p_d}\right)$.

Since $\lim_{T\rightarrow\infty} \|x\|_T=\max_i x_i$, we deduce that
$\lim_{T\rightarrow\infty}  \tilde\rho_\LSET(p,q)=\rho_\HG(p,q)$:

\begin{proof}
\begin{eqnarray*}
\lim_{T\rightarrow\infty}  \tilde\rho_\LSET(p,q) &=&  \log \left(\max_i \frac{p_i}{q_i}\right)\left(\max_i \frac{q_i}{p_i}\right),\\
&=& \log \max_i \frac{p_i}{q_i} + \log \max_i \frac{q_i}{p_i},\\
&=& \log \max_i \frac{p_i}{q_i} + \max_i \log \frac{q_i}{p_i},\\
&=& \log \max_i \frac{p_i}{q_i} + \max_i \left(-\log \frac{p_i}{q_i}\right),\\
&=& \log \max_i \frac{p_i}{q_i} - \min_i \log \frac{p_i}{q_i},\\
&=&  \log \max_i \frac{p_i}{q_i} - \log \min_i \frac{p_i}{q_i},\\
&=& \log \frac{\max_i \frac{p_i}{q_i}}{\min_i \frac{p_i}{q_i}},\\
&=:& \rho_\HG(p,q).
\end{eqnarray*}
\end{proof}

\begin{figure}[t!]
\centering
\begin{tabular}{ccc}
\includegraphics[width=0.9\columnwidth]{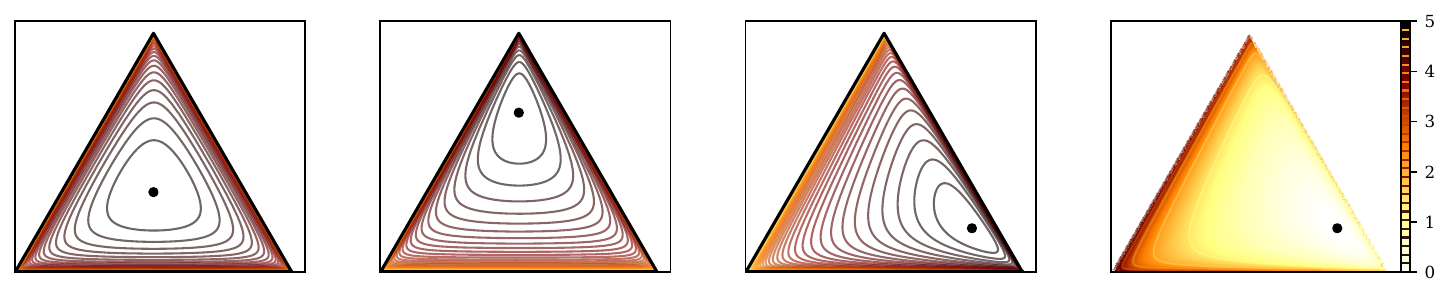} \\
$T=1$\\
\includegraphics[width=0.9\columnwidth]{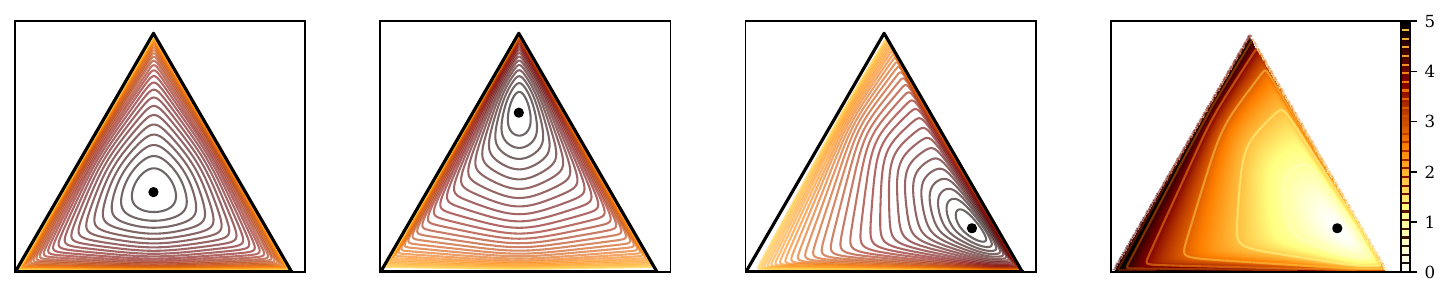} \\
$T=5$\\
\includegraphics[width=.9\columnwidth]{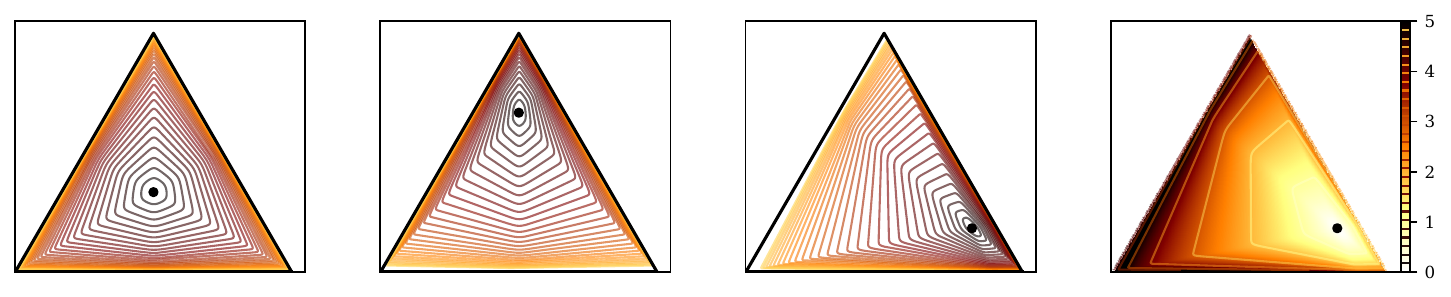} \\
$T=20$\\
\includegraphics[width=.9\columnwidth]{hilbert_balls}\\
$T\rightarrow\infty$
\end{tabular}
\caption{Balls of various radii with respect to $\tilde\rho_\LSET$ for $T\in\{1,5,20\}$ (compare with Figure~\ref{fig:shape}).
The last row shows the ball shapes with respect to Hilbert simplex distance (i.e., $T\rightarrow\infty$).\label{fig:ballT}}
\end{figure}

We can further write
\begin{equation}
\tilde\rho_\LSET(p,q)=\log \frac{\left\|\frac{p}{q}\right\|_T}{\left\|\frac{p}{q}\right\|_{-T}}.
\end{equation}

Since $\|1\|_T=d^{\frac{1}{T}}$, we have $\tilde\rho_\LSET(p,q)=\frac{2}{T}\log d$.
Notice that we may also write $\|x\|_T=d^{\frac{1}{T}}\, m_T(x_1,\ldots, x_d)$ where $m_T$ is the power mean:
$m_T(x)=m_T(x_1,\ldots, x_d)=\left(\frac{1}{d} \sum_{i=1}^d x_i^T \right)^{\frac{1}{T}}$.
Thus we can also write
\begin{equation}
\tilde\rho_\LSET(p,q)=\frac{2}{T} \log d+\log \frac{m_T\left(\frac{p}{q}\right)}{m_{-T}\left(\frac{p}{q}\right)}.
\end{equation}

Thus we have $\tilde\rho_\LSET(p,q)\geq 0$ and $\tilde\rho_\LSE(p,p)=\frac{2}{T}\log d$.
Similar to Eq.~(\ref{eq:fdbound}), we have
\begin{align*}
\rhohg(p,q) + 2\epsilon_1(r,T)
&\le
\tilde\rho_\LSET(p,q)\\
&\le
\rhohg(p,q)
+2 \epsilon_2(r,T).
\end{align*}

Since $m_{T'}(p/q)\geq m_{T}(p/q)$ whenever $T'\geq T$, we have
$$
\log \frac{m_{T'}(p/q)}{m_{-T'}(p/q)} \geq \log \frac{m_{T}(p/q)}{m_{-T}(p/q)}. 
$$
Thus we have
$\rho_\LSET(p,q)=\tilde\rho_\LSET(p,q)-\frac{2}{T} \log d$ which increases monotonically with $T$.

Figure~\ref{fig:ballT} illustrates the shape of balls in the standard simplex with respect to $\tilde\rho_\LSET$:
Observe that as $T$ decreases the ball shapes become roundish and as $T$ increases the ball shapes look like the ball shapes in Hilbert simplex geometry visualized in Figure~\ref{fig:shape}. 

The maximum deviation from the approximation
$\tilde\rho_\LSET(p,q)$
to the true Hilbert distance
$\rhohg(p,q)$ is bounded by $\frac{2}{T}\log{d}$.

Figure~\ref{fig:bound} shows the histogram of Hilbert distances on
uniform random points drawn from $\Delta_d$, and the relative error
\begin{equation*}
\mathrm{err}^T(p,q)
\eqdef
\frac{\tilde\rho_\LSET(p,q) - \rhohg(p,q)}{\rhohg(p,q)}
\end{equation*}
from the differentiable approximation $\tilde\rho_\LSET(p,q)$
to the true Hilbert distance $\rhohg(p,q)$. We can verify empirically
that $\tilde\rho_\LSET(p,q)$ is always larger than $\rhohg(p,q)$.
We observe the approximation becomes more accurate when $T$ increases from 1 to 5.
The approximation error tends to be large at a small dimensionality $d$.

\section{Comparing different geometries}\label{sec:exp}

\begin{figure}[ht!]
\centering
\begin{subfigure}{.3\textwidth}
\centering
\includegraphics[width=\textwidth]{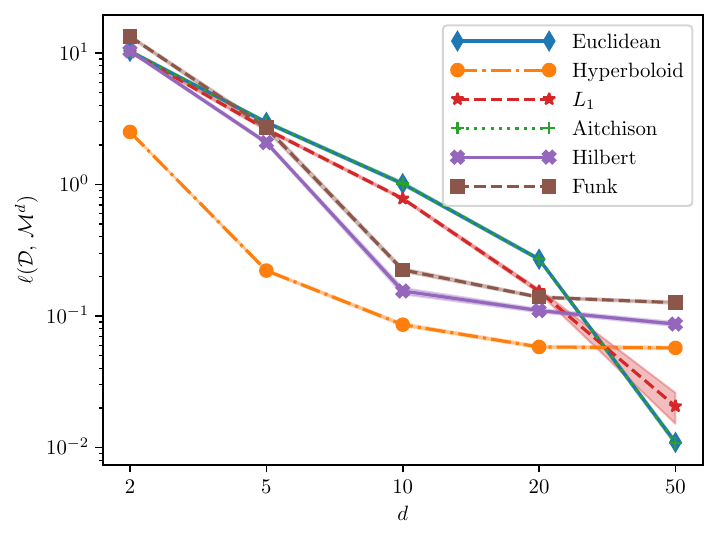}
\caption{100 random points in $\Re^{100}$}
\end{subfigure}
\begin{subfigure}{.3\textwidth}
\centering
\includegraphics[width=\textwidth]{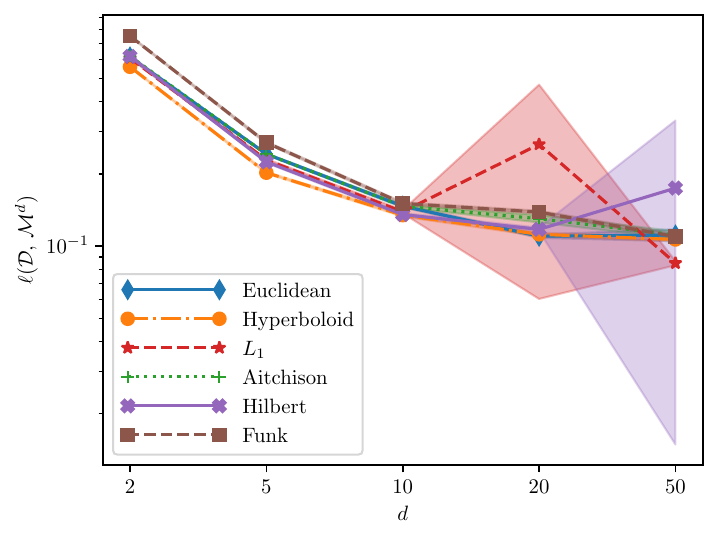}
\caption{Erd\H{o}s--R\'enyi graphs $G(n,p)$ ($n=200$, $p=0.2$)}
\end{subfigure}
\begin{subfigure}{.3\textwidth}
\centering
\includegraphics[width=\textwidth]{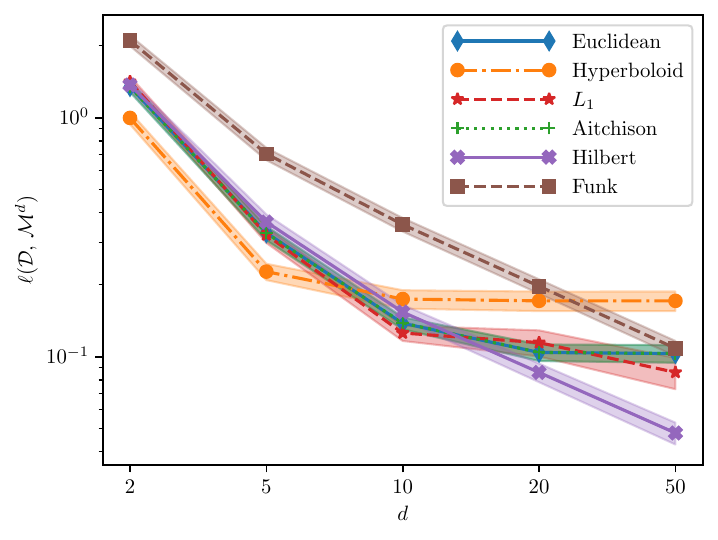}
\caption{Barab\'asi--Albert graphs $G(n,m)$ ($n=200$, $m=2$)}
\end{subfigure}

\caption{Embedding loss $\ell(\calD,\calM^d)$ against the
    number of dimensions $d$ on three random datasets.\label{fig:lossd}}
\end{figure}

\begin{figure}[tbh]
\centering

\begin{subfigure}{.3\textwidth}
\centering
\includegraphics[width=\textwidth]{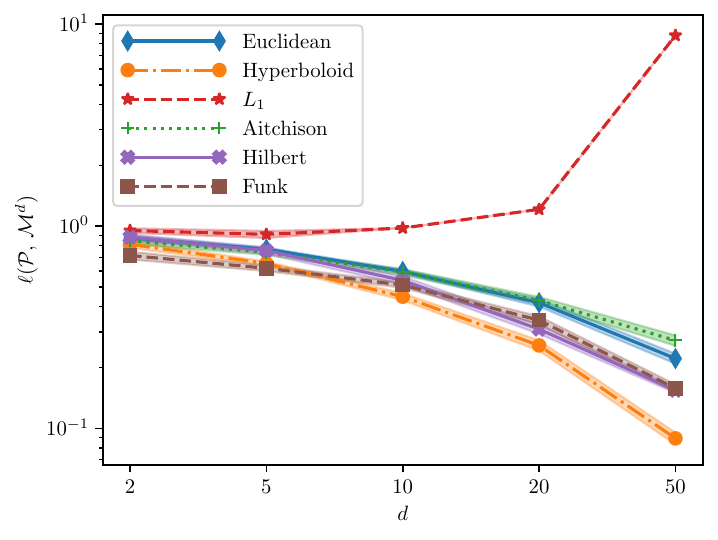}
\caption{100 random points in $\Re^{100}$}
\end{subfigure}
\begin{subfigure}{.3\textwidth}
\centering
\includegraphics[width=\textwidth]{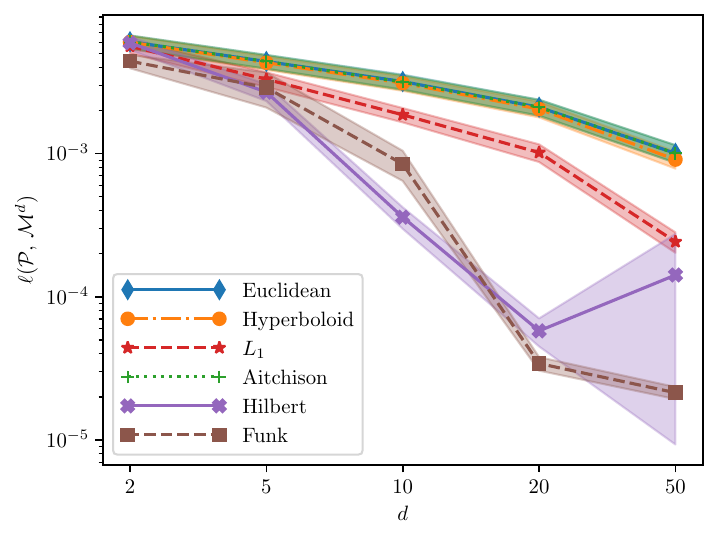}
\caption{Erd\H{o}s--R\'enyi graphs $G(n,p)$ ($n=200$, $p=0.2$)}
\end{subfigure}
\begin{subfigure}{.3\textwidth}
\centering
\includegraphics[width=\textwidth]{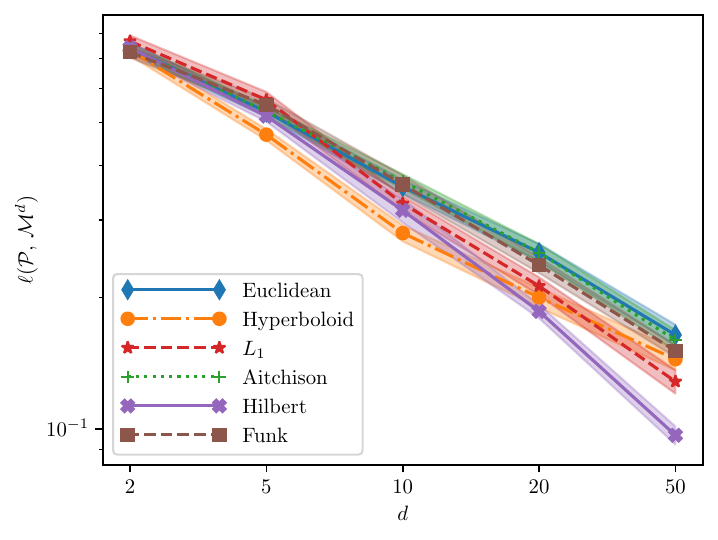}
\caption{Barab\'asi--Albert graphs $G(n,m)$ ($n=200$, $m=2$)}
\end{subfigure}

\caption{Embedding loss $\ell(\calP,\calM^d)$ against
    the number of dimensions $d$.}\label{fig:lossp}
\end{figure}

We empirically compare the representation power of different geometries for embedding the input data
as a set of points on the manifold. Our objective is not to build a
full-fledged embedding method, but to have simple
well-defined measurements to \emph{compare different geometries}.
Notice that if $(M_1,\rho_1)$ is isometric to $(M_2,\rho_2)$ then these geometries have the same representation power.
We prefer to choose model geometries with unconstrained domains for optimization.
Thus the Poincar\'e hyperbolic ball  and the Aitchison simplex embeddings are considered via the equivalent Minkowski hyperboloid and Euclidean models, respectively.

We consider embedding two different types of data onto a manifold $\calM^d$ of dimensionality $d$, which we also denote as $\calM$. The first is given by a distance matrix $\calD_{n\times{n}}$. The representation loss associated with $\calM^d$ is
\begin{equation*}
\ell(\calD, \calM^d)
\;\eqdef\;
\inf_{\bm{Y}\in(\calM^d)^n}
\frac{1}{n^2}\sum_{i=1}^n \sum_{j=1}^n
\left( \calD_{ij}
    - \rho_{\calM}(\bm{y}_i,\bm{y}_j) \right)^2,
\end{equation*}
where $\bm{Y}=\{\bm{y}_i\}_{i=1}^n$
is a set of $n$ free points on $\calM^d$,
and $\rho_{\calM}$ is the distance on $\calM$.
The infimum means the error associated with
the best representation of the given distance matrix.
A smaller value of $\ell(\calD, \calM^d)$
means $\calM^d$ can better represent the distance matrix $\calD$.

\section{Experimental Configurations}\label{app:expconfig}

We rewrite the loss as
\begin{align}
\ell(\calD, \calM^d) &\;\eqdef\;
\inf_{\bm{Y}\in(\calM^d)^n} L(\calD, \calM^d, \bm{Y})\nonumber\\
L(\calD, \calM^d, \bm{Y})
&=
\frac{1}{n^2}\sum_{i=1}^n \sum_{j=1}^n
\left( \calD_{ij}
    - \rho_{\calM}(\bm{y}_i,\bm{y}_j) \right)^2.\\
\ell(\calP, \mathcal{M}^d)
&\;\eqdef\;
\inf_{\bm{Y}\in(\calM^d)^n}
L(\calP, \calM^d, \bm{Y})\nonumber\\
L(\calP, \calM^d, \bm{Y})
&=
\frac{1}{n}\sum_{i=1}^n \sum_{j=1}^n
\calP_{ij} \log\frac{\calP_{ij}}{q_{ij}(\bm{Y})}.
\end{align}
The functions to be minimized,
$L(\calD, \calM^d, \bm{Y})$ and $L(\calP, \calM^d, \bm{Y})$,
are both expressed in the form of a sample average.
Therefore they can be optimized
based on stochastic gradient descent (SGD).

In the experiments, we use {\tt PyTorch} to minimize
$L(\calD, \calM^d, \bm{Y})$ and $L(\calP, \calM^d, \bm{Y})$
with respect to the coordinate matrix $\bm{Y}$.
The initial $\bm{Y}_0$ is based on a multivariate Gaussian
distribution so that the trace of the covariance matrix equals 1.

The optimizer is {\tt Adam}~\cite{Adam-2015} in its default settings except the learning rate.
The learning rate is based on a log-uniform distribution
(logarithm of the learning rate is uniform)
in the range $[10^{-3}, 1]$.
The mini-batch size is simply set to 16. We observed that reducing the
mini-batch size can generally achieve a smaller loss for all the methods.
For each configuration of (dataset, manifold $\calM^d$, dimensionality $d$),
the optimal learning rate is selected based on a Tree Parzen estimator with 20 trials.
The maximum number of epochs is $3000$. We use early stopping
to terminate the optimization process if convergence is detected.

Each dataset is generated independently for 10 times, based on different random
seeds. The loss for each of these generated dataset is computed independently.
The average and standard deviation are reported.

For all simplex embeddings (Hilbert, Funk, Aitchison),
we represent the embedding in the log-coordinates
$l(p)=(\log{}p_1,\dots,\log{p}_d)$.
Because
$\rho_\HG(p,q) = \|l(p)-l(q)\|_\var = \|l(\tilde{p})-l(\tilde{q})\|_\var$,
we can directly optimize the Hilbert simplex embedding on the coordinates
$l(\tilde{p})$ which are free points in $\Re^d$.
For Funk and Aitchison embeddings, we need to ensure 
that the embedding to be optimized can be mapped back into the simplex domain.


We set $\calD$ to be
\ding{192}
the distance matrix between $n$ random points in $\Re^{n}$ ($n=100$);
or \ding{193} the pairwise shortest path
between any two nodes
on an Erd\H{o}s--R\'enyi graph $G(n,p)$ ($n=200$, $p=0.2$);
or \ding{194} the node distance on a Barab\'asi--Albert
graph $G(n,m)$ ($n=200$, $m=2$).

On the other hand,
we can evaluate the geometries based on a
given probability matrix
$\calP_{n\times{n}}$, meaning some non-negative pair-wise similarities.
$\calP$ is row-normalized so that each row sums to 1.
We consider the loss
\begin{align*}
\ell(\calP, \mathcal{M}^d)
&\;\eqdef\;
\inf_{\bm{Y}\in(\calM^d)^n}
\frac{1}{n}\sum_{i=1}^n \sum_{j:j\neq{i}}
\calP_{ij} \log\frac{\calP_{ij}}{q_{ij}(\bm{Y})},\\
q_{ij}(\bm{Y})
&\;\eqdef\;
\frac{\exp(-\rho_{\calM}^2(\bm{y}_i,\bm{y}_j)} {\sum_{j:j\neq{i}}\exp(-\rho_{\calM}^2(\bm{y}_i,\bm{y}_j))},
\end{align*}
where $\ell(\calP,\calM^d)$ is
the empirical average of the KL divergence
between the probability distributions
$\calP_{i\cdot}$ and $q_{i\cdot}$.
Notice that $\ell$ is abused to denote
both the loss associated with a distance matrix
$\calD$ and a probability matrix $\calP$.
Using the same datasets as in embedding $\mathcal{D}$,
we set $\calP$ to be
\ding{192} pairwise similarities of $n$ random points in $\Re^n$
measured by the heat kernel after normalization;
\ding{193} the random walk similarity
starting from node $i$ and ending at any other
node $j$ after 5 steps on an Erd\H{o}s--R\'enyi graph,
or \ding{194} a Barab\'asi--Albert graph.

The embedding losses $\ell(\calD,\calM^d)$
and $\ell(\calP,\calM^d)$ are approximated based
on the Adam optimizer~\cite{adam}.
We minimize the function whose infimum is to
be taken with respect to $\bm{Y}$, starting from
some randomly initialized points $\bm{Y}_0$,
until converging to a local optimum.
The loss $\ell(\calD, \calM^d)$ is similar to
the stress function in multi-dimensional scaling~\cite{mds},
while $\ell(\calP, \calM^d)$ is similar to
the losses in manifold learning~\cite{sne}
or graph embedding~\cite{deepwalk}.
Our losses do not depend on many practical techniques
such as negative sampling, and are helpful to
measure the fitness of the manifold $\calM^d$ to the input
$\calD$ or $\calP$ regardless of these practical aspects.
The detailed experimental protocols and
more extensive results are in~\cite{nielsen2022non}.

Figure~\ref{fig:lossd} shows $\ell(\calD,\calM^d)$ (in log scale)
against $d$ for six different choices of $\calM$:
\ding{192} $\Re^d$ with Euclidean norm;
\ding{193} $\Re^d$ with $L_1$ norm;
\ding{194} Poincar\'e/Minkowski hyperboloid;
\ding{195} $\Delta_d$ with Aitchison distance;
\ding{196} $\Delta_d$ with Hilbert distance;
\ding{197} $\Delta_d$ with Funk distance.
For each configuration, we generate $10$ different instances of the random
points/graphs, and the standard deviation is shown as color bands.
We observe that in general, as $d$ grows, all manifolds
have decreasing $\ell(\calD,\calM^d)$.
The jitters and large deviations are due to that the optimizer
stopped at a bad local optimum in some of the experiments.
The Hilbert simplex and the Poincar\'e hyperboloid are observed as the best
geometries which can preserve the input distance matrix.
The Funk distance is asymmetric and is not as good as the other baselines.

Figure~\ref{fig:lossp} shows $\ell(\calP,\calM^d)$ (in log scale)
against $d$ for the investigated geometries.
On the random points dataset, the $L_1$ distance presents an increasing loss with $d$.
This could be due to its mismatch with the geometry of the dataset
and that the optimizer stopped at a local optimum.
Overall, the proposed Hilbert simplex geometry can better represent pairwise
similarities in $\Re^d$ and graph random walk similarity matrix, as compared
with the baselines. Funk geometry also achieves good score in
representing the Erd\H{o}s--R\'enyi graphs.

Figure~\ref{fig:erdos} shows $\ell(\calD,\calM^d)$ (left) and $\ell(\calP,\calM^d)$
(right) against $d$ on the Erd\H{o}s--R\'enyi random graph dataset with $p=0.05$
and $p=0.5$
(in the main text we studied the case when $p=0.2$), where $p$ is the probability
for any pair of nodes $i$ and $j$ to be connected by an edge.

Figure~\ref{fig:barabasi} shows $\ell(\calD,\calM^d)$ (left) and
$\ell(\calP,\calM^d)$ (right) against $d$ on the
Barab\'asi--Albert graphs $G(n,m)$ with $m=1$ and $m=3$
(in the main text we only studied the case when $m=2$),
where $m$ is the number of edges to attach when a new node is created.

In both figures, $\calP$ is random walk similarities on these graph datasets.
We do not simulate real random walks as in graph embedding methods.
Instead, we use the graph adjacency matrix to construct the
transition probability matrix, whose matrix power gives
the random walk similarity matrix $\calP$.

\def\sizefig{0.49\textwidth}

\begin{figure*}[ht]
\begin{subfigure}{\sizefig}
  \includegraphics[width=\textwidth]{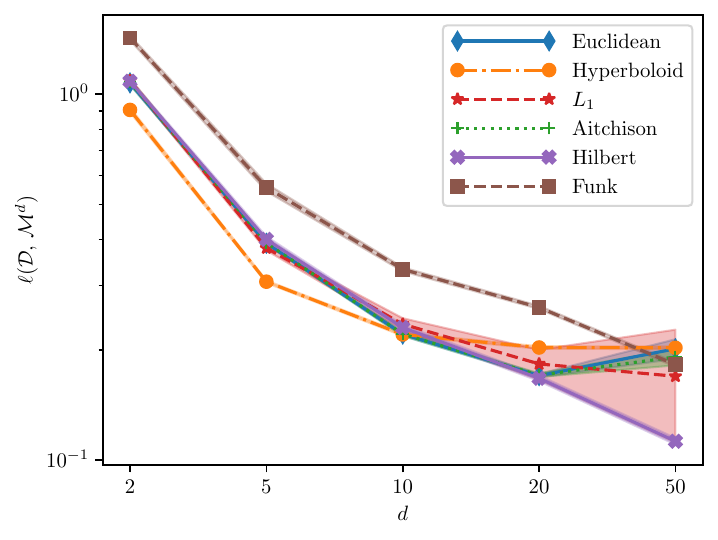}
  \caption{$\ell(\calD,\calM^d)$ ($n=200$, $p=0.05$)}
\end{subfigure}
\begin{subfigure}{\sizefig}
  \includegraphics[width=\textwidth]{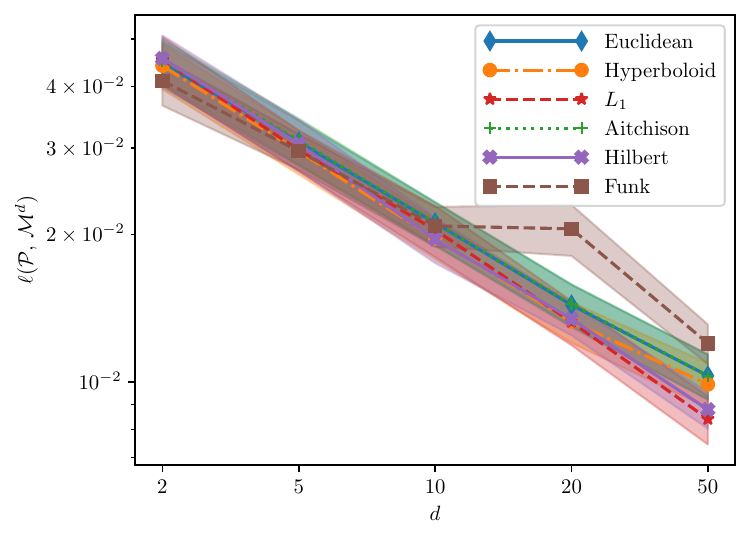}
  \caption{$\ell(\calP,\calM^d)$ ($n=200$, $p=0.05$)}
\end{subfigure}
\begin{subfigure}{\sizefig}
  \includegraphics[width=\textwidth]{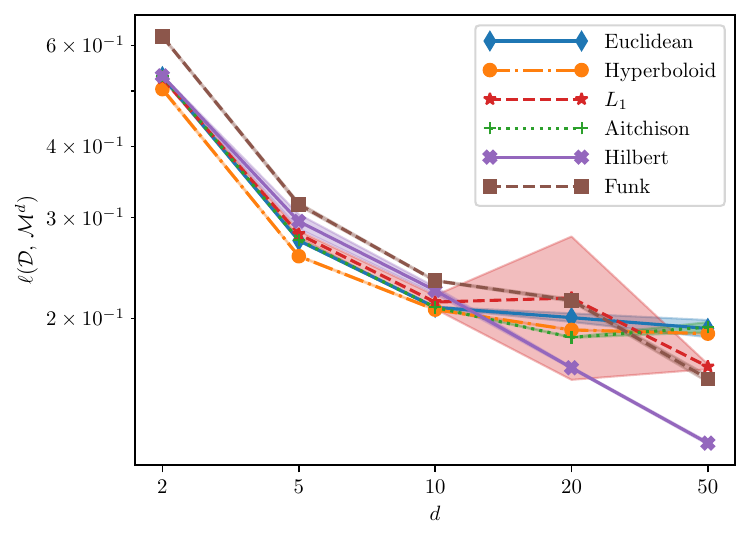}
  \caption{$\ell(\calD,\calM^d)$ ($n=200$, $p=0.5$)}
\end{subfigure}
\begin{subfigure}{\sizefig}
  \includegraphics[width=\textwidth]{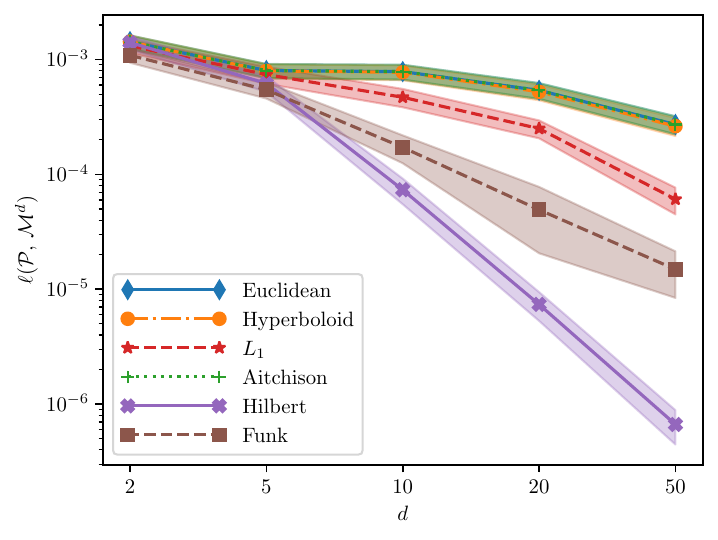}
  \caption{$\ell(\calP,\calM^d)$ ($n=200$, $p=0.5$)}
\end{subfigure}

\caption{Embedding losses against $d$ (Erd\H{o}s--R\'enyi random graph $G(n,p)$).}
\label{fig:erdos}
\end{figure*}

\begin{figure*}[ht]
\begin{subfigure}{\sizefig}
  \includegraphics[width=\textwidth]{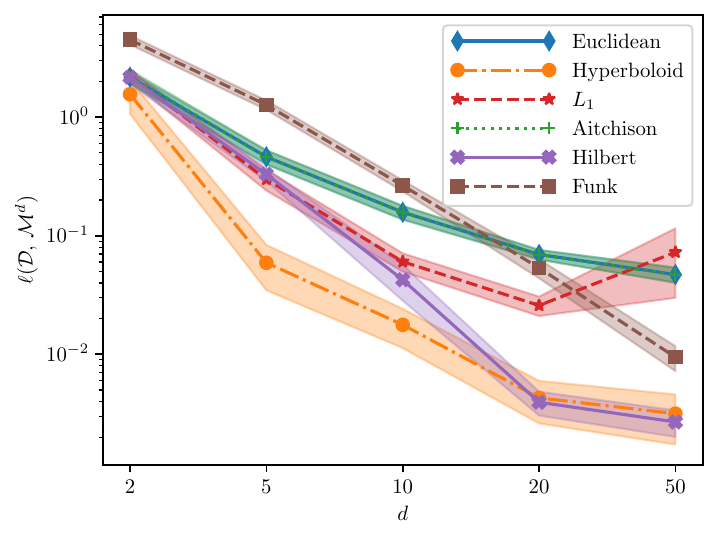}
  \caption{$\ell(\calD,\calM^d)$ ($n=200$, $m=1$)}
\end{subfigure}
\begin{subfigure}{\sizefig}
  \includegraphics[width=\textwidth]{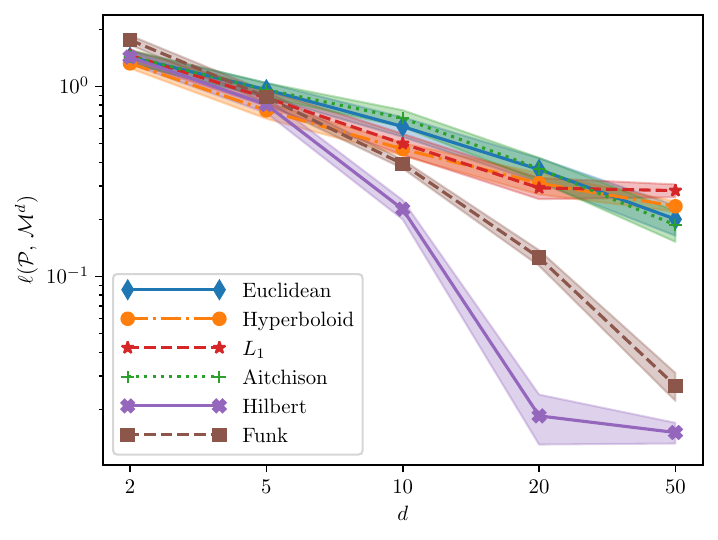}
  \caption{$\ell(\calP,\calM^d)$ ($n=200$, $m=1$)}
\end{subfigure}
\begin{subfigure}{\sizefig}
  \includegraphics[width=\textwidth]{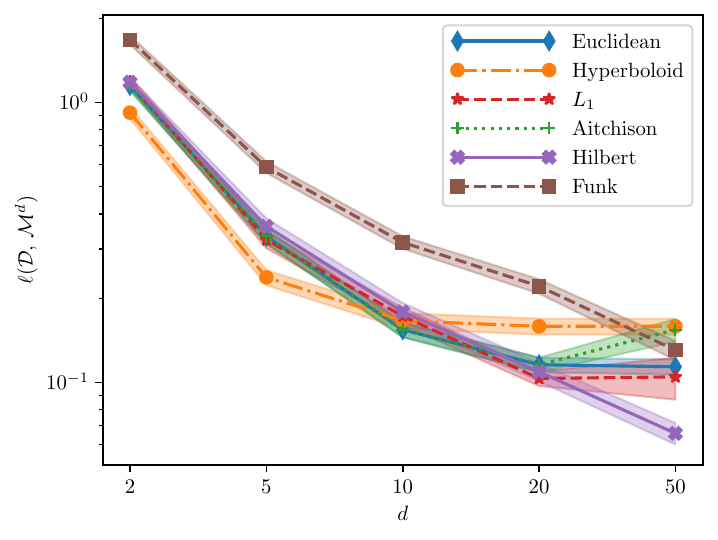}
  \caption{$\ell(\calD,\calM^d)$ ($n=200$, $m=3$)}
\end{subfigure}
\begin{subfigure}{\sizefig}
  \includegraphics[width=\textwidth]{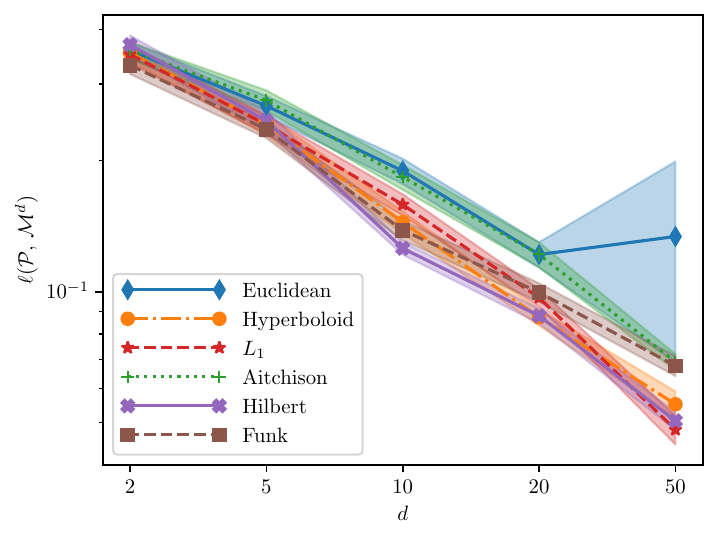}
  \caption{$\ell(\calP,\calM^d)$ ($n=200$, $m=3$)}
\end{subfigure}

\caption{Embedding losses against $d$ (Barab\'asi--Albert graphs $G(n,m)$).}
\label{fig:barabasi}
\end{figure*}

\section{Conclusion and discussion}\label{sec:concl}
We presented the Hilbert simplex
geometry with its closed form distance (Eq.~\ref{eq:rhohd}) and its differentiable approximation (Eq.~\ref{eq:lsetapprox}).
We provided a simple proof
that the Funk and Hilbert 
distances both satisfy the information
monotonicity.
We made use of an isometry
between the Hilbert
simplex and a normed vector space well-suited to carry optimization.
We highlighted a connection between the Aitchison distance and the Hilbert projective distance.
By comparing with commonly-used geometries in machine learning,
we showed experimentally that the Hilbert simplex geometry can better
embed a given distance matrix
or graph random walk similarities.

We showed experimentally that our non-linear embedding technique based on the differentiable approximation of the Hilbert simplex distance
  ($\tilde\rho_\LSET$ of Eq.~\ref{eq:lsetapprox}) is fast, numerically robust, and competitive compared to (hyperboloid) hyperbolic embeddings.
This holds despite the fact that the Hilbert simplex geometry amounts to a normed vector space with ball volumes increasing polynomially with the radii and volume entropy being null~\cite{vernicos2017approximability}.
The efficiency of hyperbolic geometry embeddings relies on the property that ball volumes increases exponentially with radii and  that the volume entropy coincides with the space dimension minus one.
However,  hyperbolic embeddings are prone  to numerical issues in practice~\cite{mishne2023numerical}.
By just slightly and smoothly rounding the simplex, we get a $C^2$ convex boundary finely approximating the simplex boundary which induces a hyperbolic type of Hilbert geometry~\cite{troyanov2013funk,kobayashi2019} with Finsler flag curvature $-1$ 
(generalizing the notion of Riemannian sectional curvature). 
Thus the good experimental results obtained by our differentiable distortion $\tilde\rho_\LSET$ may be explained by the fact that we changed the  underlying nature of the curvature of the space by using the distortion $\tilde\rho_\LSET$ rather than using the non-differentiable Hilbert simplex distance.   

\vskip 0.5cm
\noindent Additional materials are available at~\url{https://franknielsen.github.io/HSG/}

\bibliographystyle{plain}
\bibliography{HSG-embedding}


\end{document}